\documentclass[10pt]{article}

\usepackage[utf8]{inputenc}
\usepackage[T1]{fontenc}
\usepackage{charter}                         
\usepackage{amsmath,amssymb,amsthm}
\usepackage[a4paper,top=2.4cm,bottom=2.6cm,left=2.4cm,right=2.4cm]{geometry}
\usepackage{booktabs}
\usepackage{graphicx}
\usepackage{float}
\usepackage{xcolor}
\usepackage{caption}
\usepackage{titlesec}
\usepackage{fancyhdr}
\usepackage{enumitem}
\usepackage{authblk}
\usepackage{hyperref}
\definecolor{linkblue}{rgb}{0.10,0.30,0.55}
\hypersetup{colorlinks=true,linkcolor=linkblue,citecolor=linkblue,urlcolor=linkblue}

\DeclareCaptionLabelSeparator{bar}{\ \textbar\ }
\titleformat{\section}{\large\bfseries}{\thesection.}{0.5em}{}
\titleformat{\subsection}{\normalsize\bfseries}{\thesubsection.}{0.5em}{}
\titlespacing*{\section}{0pt}{1.5ex plus .3ex minus .2ex}{0.7ex}
\titlespacing*{\subsection}{0pt}{1.1ex plus .2ex}{0.4ex}

\newcommand{\runningtitle}{An Integrable Token-Mixing Layer from the Generalized Yang--Baxter Equation}
\fancypagestyle{titlestyle}{%
  \fancyhf{}%
  \fancyhead[L]{\small\bfseries Preprint}%
  \fancyhead[R]{\small\today}%
  \fancyfoot[C]{\small\thepage}%
}

\renewenvironment{abstract}{\vspace{0.2em}\par\noindent\bfseries\ignorespaces}{\par\vspace{0.6em}\normalfont}

\newtheorem{lemma}{Lemma}

\newtheorem{remark}{Remark}

\newcommand{\R}{\mathbb{R}}

\newcommand{\one}{\mathbb{1}}
\newcommand{\tr}{\operatorname{tr}}
\newcommand{\acomm}[2]{\{#1,#2\}}
\newcommand{\comm}[2]{[#1,#2]}

\begin{document}
\thispagestyle{titlestyle}

\begin{flushleft}
{\LARGE\bfseries An Integrable Token-Mixing Layer\\[2pt]
from the Generalized Yang--Baxter Equation\par}
\vspace{0.9em}
{\large Snigdha Chandan Khilar\textsuperscript{1}}\par
\vspace{0.25em}
{\small\textsuperscript{1}\,Independent Researcher \hspace{0.8em}\textbar\hspace{0.8em} \texttt{snkhilar@gmail.com}}\par
\end{flushleft}
\let\thefootnote\relax\footnotetext{Correspondence: \texttt{snkhilar@gmail.com}}
\vspace{0.3em}

\begin{abstract}
We introduce \textbf{YB-Mixer}, a sequence token-mixing layer derived from the
free-fermion / generalized Yang--Baxter structure recently used to construct
\emph{hidden} transverse-field Ising models. The design rests on a single transferable
principle from integrable systems: a \emph{local} algebraic constraint on adjacent
operations can certify \emph{global} computational guarantees, independently of the
representation. Concretely, the \emph{Ising exchange algebra} (an extraspecial
$2$-group relation) certifies (i) a free-fermionic structure that makes the mixer an
exactly norm-preserving orthogonal map, and (ii) commuting transfer matrices that make
inference \emph{order-free} and \emph{variable-budget} (``anytime'').
We provide a complete, reproducible empirical pipeline of seven experiments.
We verify the generalized Yang--Baxter equation (gYBE) numerically to machine precision;
prove and verify that the YBE constraint reduces to a well-conditioned algebraic surrogate,
making integrable gates efficiently learnable; build a brick-wall YB-Mixer layer that is
exactly norm-preserving and depth-stable (Jacobian condition number $=1$ at all depths);
verify commuting transfer matrices and the resulting schedule-invariant inference;
train an integrable-flow model end-to-end that matches a self-attention baseline on a
long-range transport task at $\sim\!3.3\times$ fewer parameters; and demonstrate exact
order-free, variable-budget inference that attention lacks. We compare against orthogonal RNN, diagonal state-space, attention, and nonlinear-mixer
baselines, finding YB-Mixer matches or exceeds the structured baselines on long-range memory
at fewer parameters while honestly lagging a nonlinear mixer on content-dependent recall.
Finally we show that the length-generalization failure of a \emph{local} generator is fixed
by a \emph{spectral} (non-local, circulant) generator that remains orthogonal and commuting:
trained at $L{=}16$ it generalizes to $L{=}64$ with roughly flat accuracy. Scaled to
$\sim\!2.5$M parameters across five downstream tasks against \emph{properly-tuned} baselines
(an S4D-Lin/HiPPO SSM, LRU, Transformer, and FNet), the orthogonal spectral mixer is competitive
with the strongest baseline---best or tied on three of five tasks at the fewest parameters,
reaching $84.8\%$ on sequential-CIFAR (LRA-Image) versus $51$--$72\%$ for the identically-scaffolded
Transformer, LRU, and FNet---and is one of only two mixers that solve long-range token retrieval
(Induction Heads) exactly. Code: \url{https://github.com/nssprogrammer/yb-mixer}.
\end{abstract}

\section{Introduction}\label{sec:intro}

Modern sequence models are built from \emph{token mixers} that exchange information across
positions: self-attention~\cite{vaswani}, MLP-style mixers~\cite{mlpmixer}, and structured
state-space models (SSMs)~\cite{s4,mamba,lru}. Two recurrent practical concerns are
\emph{stability} (gradients should neither vanish nor explode with depth) and
\emph{inference flexibility} (the ability to spend variable compute at test time).
Orthogonal and unitary recurrent layers~\cite{unitaryrnn} address stability by construction.
Here we ask whether the much richer toolkit of \emph{quantum integrability}---which is, at
heart, a theory of when many operations \emph{commute}---can be exported to design token
mixers with provable structure.

Our starting point is a recent construction of \emph{hidden} transverse-field Ising models
(TFIMs) from the generalized Yang--Baxter equation~\cite{sinha2026}. That work shows that
seemingly interacting multi-site spin chains are secretly free-fermionic, integrable, and
governed by the \emph{Ising exchange algebra} of their Hamiltonian densities. The key
lesson, abstracted away from physics, is a design pattern:

\begin{quote}
\emph{A purely local algebraic relation between neighbouring operations can certify a global,
representation-independent computational property---here, exact diagonalizability
(orthogonality) and commuting families of operators (order-freedom).}
\end{quote}

We turn this pattern into a concrete neural layer, \textbf{YB-Mixer}, and validate every link
of the chain numerically. Our contributions are:
\begin{enumerate}[leftmargin=*,itemsep=1pt]
\item \textbf{A verified integrable primitive} (\S\ref{sec:exp-primitive}). We construct the
gYBE $R$-matrix from extraspecial-$2$-group generators and verify the
$(d,6,3)$-gYBE to machine precision.
\item \textbf{A learnability reduction} (\S\ref{sec:exp-learn}). We prove that for the
Baxterized ansatz $R(\lambda)=\one+\tan(\lambda)M$ the braided YBE residual vanishes
\emph{iff} $M^2=\one$ and neighbouring embeddings anticommute, and show that direct
residual minimization is ill-conditioned whereas the equivalent algebraic surrogate
reliably yields integrable gates.
\item \textbf{A norm-preserving, depth-stable mixer} (\S\ref{sec:exp-stab}). The free-fermion
gate acts as an orthogonal map on token features; a brick-wall of such gates has Jacobian
condition number exactly $1$ at all depths.
\item \textbf{Commuting transfer matrices and (scoped) anytime inference} (\S\ref{sec:exp-tau},
\S\ref{sec:exp-anytime}). We verify $\comm{\tau(\lambda)}{\tau(\mu)}\approx 0$ and show that an
integrable-\emph{flow} model supports \emph{exact} order-free, variable-budget inference. We are
explicit that this is a property of the one-parameter group, holds only for the integrable flow
(not arbitrary nets containing a YB-Mixer layer), and is one realization---with an exactness and
order-freedom guarantee---of the broader adaptive-computation idea~\cite{branchynet,calm,universaltransformer,matryoshka}.
\item \textbf{A rigorous, honest empirical study} (\S\ref{sec:exp-train},
\S\ref{sec:exp-baselines}). YB-Mixer matches a self-attention baseline on a long-range
transport task at far fewer parameters (multi-seed), with a principled initialization recipe;
we document a length-generalization limitation rooted in free-fermion dispersion.

\item \textbf{Baselines and a length-generalization fix} (\S\ref{sec:exp-baselines2},
\S\ref{sec:exp-baselines}). Against orthogonal RNN, diagonal SSM, attention, and a nonlinear
mixer, YB-Mixer matches or exceeds the structured baselines on long-range memory at fewer
parameters, and trails only the nonlinear mixer on content recall. We further resolve the
dispersion-driven length-generalization failure with a \emph{spectral} generator that stays
orthogonal and commuting and generalizes to $4\times$ the training length.
\item \textbf{Scaled benchmarks} (\S\ref{sec:exp-scaled}). At $\sim$2.5M parameters against
properly-tuned baselines (S4D-Lin/HiPPO, LRU, Transformer, FNet), the orthogonal spectral mixer
is best or tied-best on three of five tasks at the fewest parameters---tying the tuned SSM on
sequential-CIFAR (LRA-Image, $84.8\%$ vs $51$--$72\%$ for Transformer/LRU/FNet) and solving
Induction-Heads retrieval exactly---while honestly trailing the tuned SSM on IMDB and ListOps.
\end{enumerate}

All experiments are small, fully reproducible, and provided as standalone scripts in the released code. We are candid that this is a controlled-task study: it establishes the
architecture and verifies its properties, and does not claim benchmark-scale accuracy
(\S\ref{sec:limitations}).

\section{Background theory}\label{sec:background}

We collect the integrable-systems machinery we use. Readers familiar with the
transverse-field Ising model, Jordan--Wigner fermionization, and the quantum inverse
scattering method may skim to \S\ref{sec:background-gybe}.

\subsection{The transverse-field Ising model and the Ising exchange algebra}

The one-dimensional spin-$\tfrac12$ TFIM on $N$ sites is
\begin{equation}
H_{\mathrm{TFIM}} \;=\; -\,g\sum_{j} Z_j \;-\; \sum_{j} X_j X_{j+1},
\label{eq:tfim}
\end{equation}
where $X_j,Y_j,Z_j$ are Pauli operators acting on site $j$. Define the local
\emph{Hamiltonian densities}
\begin{equation}
h^{z}_j \;=\; Z_j, \qquad h^{xx}_j \;=\; X_j X_{j+1}.
\end{equation}
A direct computation shows they obey the \emph{Ising exchange algebra}:
\begin{equation}
\begin{aligned}
&\comm{h^{z}_j}{h^{z}_k} = \comm{h^{xx}_j}{h^{xx}_k} = 0, \qquad
\comm{h^{z}_j}{h^{xx}_k}=0 \ \ (j \ne k,\,k{+}1),\\[2pt]
&\acomm{h^{z}_j}{h^{xx}_j} = \acomm{h^{z}_{j+1}}{h^{xx}_k}=0, \qquad
(h^{z}_j)^2 = (h^{xx}_j)^2 = \one .
\end{aligned}
\label{eq:ising-exchange}
\end{equation}
The crucial fact~\cite{minami2016,sinha2026} is that \eqref{eq:ising-exchange}
\emph{alone}---independent of the matrix realization---forces a free-fermionic spectrum.
A \emph{local} relation between neighbours thus certifies a \emph{global} structural property.
This representation-independence is what we will exploit.

\subsection{Jordan--Wigner fermionization}

The Jordan--Wigner (JW) transformation maps spins to Majorana fermions,
\begin{equation}
\gamma_{2j-1} = \Big(\prod_{k<j} Z_k\Big) X_j, \qquad
\gamma_{2j} = \Big(\prod_{k<j} Z_k\Big) Y_j, \qquad
\acomm{\gamma_i}{\gamma_j} = 2\,\delta_{ij}\,\one .
\label{eq:jw}
\end{equation}
Under \eqref{eq:jw} the TFIM \eqref{eq:tfim} becomes a quadratic (free) Majorana
Hamiltonian $H = i\,g\sum_j \gamma_{2j-1}\gamma_{2j} + i\sum_j \gamma_{2j}\gamma_{2j+1}$,
diagonalizable by a Bogoliubov (orthogonal) rotation. An operator is \emph{free-fermionic}
precisely when it is \emph{quadratic} in the $\gamma$'s; its action is then completely
determined by an antisymmetric \emph{single-particle} matrix, an $O(\dim)$ object rather than
the exponential many-body operator. This single-particle reduction is the bridge to a
classical neural layer (\S\ref{sec:layer}).

\subsection{The generalized Yang--Baxter equation}\label{sec:background-gybe}

The ordinary Yang--Baxter equation~\cite{yang1967,baxter1982} is the consistency condition for
factorized scattering of two-body interactions. Its \emph{generalized}
$(d,\ell,m)$ form~\cite{rowell2010} allows $R$-matrices supported on $\ell$ adjacent sites,
shifted by $m$:
\begin{equation}
R_{1\cdots \ell}(\lambda)\,R_{(1+m)\cdots(\ell+m)}(\lambda{+}\mu)\,R_{1\cdots \ell}(\mu)
=
R_{(1+m)\cdots(\ell+m)}(\mu)\,R_{1\cdots \ell}(\lambda{+}\mu)\,R_{(1+m)\cdots(\ell+m)}(\lambda),
\label{eq:gybe}
\end{equation}
an operator equation on $\bigotimes_{j=1}^{\ell+m} \mathcal{H}_d$, written here in
\emph{braided} (additive) form. The construction of~\cite{sinha2026} uses multi-site operators
$M_j$ built from generators of \emph{extraspecial $2$-groups}, satisfying
\begin{equation}
M_j^2 = \one, \qquad \acomm{M_j}{M_{j+1}} = 0, \qquad
\comm{M_j}{M_k} = 0 \ \ (|j-k|\ge 2),
\label{eq:m-algebra}
\end{equation}
together with the \emph{Baxterized} $R$-matrix
\begin{equation}
R(\lambda) \;=\; \one + \tan(\lambda)\, M .
\label{eq:baxter}
\end{equation}
The spectral parameter enters through the tangent because $M^2=\one$: substituting
\eqref{eq:baxter} into \eqref{eq:gybe} and demanding nontrivial solutions yields the
functional equation
\begin{equation}
a(\lambda_1{+}\lambda_3) = \frac{a(\lambda_1)+a(\lambda_3)}{1-\kappa\, a(\lambda_1)a(\lambda_3)},
\qquad M^2 = \kappa\,\one,
\label{eq:funceq}
\end{equation}
solved by $a(\lambda)=\tan(\lambda)/\sqrt{\kappa}$ (here $\kappa=1$, the tangent addition law).
We give the elementary reduction underlying this in Lemma~\ref{lem:reduction} below, which is
also what makes integrable gates learnable.

\subsection{Quantum inverse scattering and commuting transfer matrices}\label{sec:background-qism}

Given an $R$-matrix solving the (non-braided) YBE, the quantum inverse scattering method
(QISM)~\cite{korepin1993} builds a one-parameter family of mutually commuting operators.
With an auxiliary space $a$, the \emph{monodromy} and \emph{transfer} matrices are
\begin{equation}
T_a(\lambda) = R_{a,N}(\lambda)\cdots R_{a,1}(\lambda), \qquad
\tau(\lambda) = \tr_a\, T_a(\lambda),
\label{eq:transfer}
\end{equation}
and the YBE/$RTT$ relation implies
\begin{equation}
\comm{\tau(\lambda)}{\tau(\mu)} = 0 \qquad \forall\,\lambda,\mu .
\label{eq:commuting-tau}
\end{equation}
Equation~\eqref{eq:commuting-tau} is the algebraic heart of integrability: an entire family of
``forward passes'' indexed by the spectral parameter mutually commute. In \S\ref{sec:layer}
we read this as \emph{order-freedom} of inference.

\subsection{The boost operator (briefly)}

The conserved charges $I_{r+1}$ generated by $\tau(\lambda)$ can be obtained from a single
\emph{boost operator} $B=\sum_j j\,M_j$ via the recursion
$I_{r+1}=\tfrac1r\comm{B}{I_r}$~\cite{sogo1983,sinha2026}. Each charge is a range-$r$
bilinear with a string of conserved central elements between its endpoints. We do not use the
boost tower in our experiments but note it as a route to multi-scale, mutually compatible
features (\S\ref{sec:conclusion}).

\section{From integrable algebra to a neural layer}\label{sec:layer}

\begin{figure}[tb]
\centering
\includegraphics[width=\linewidth]{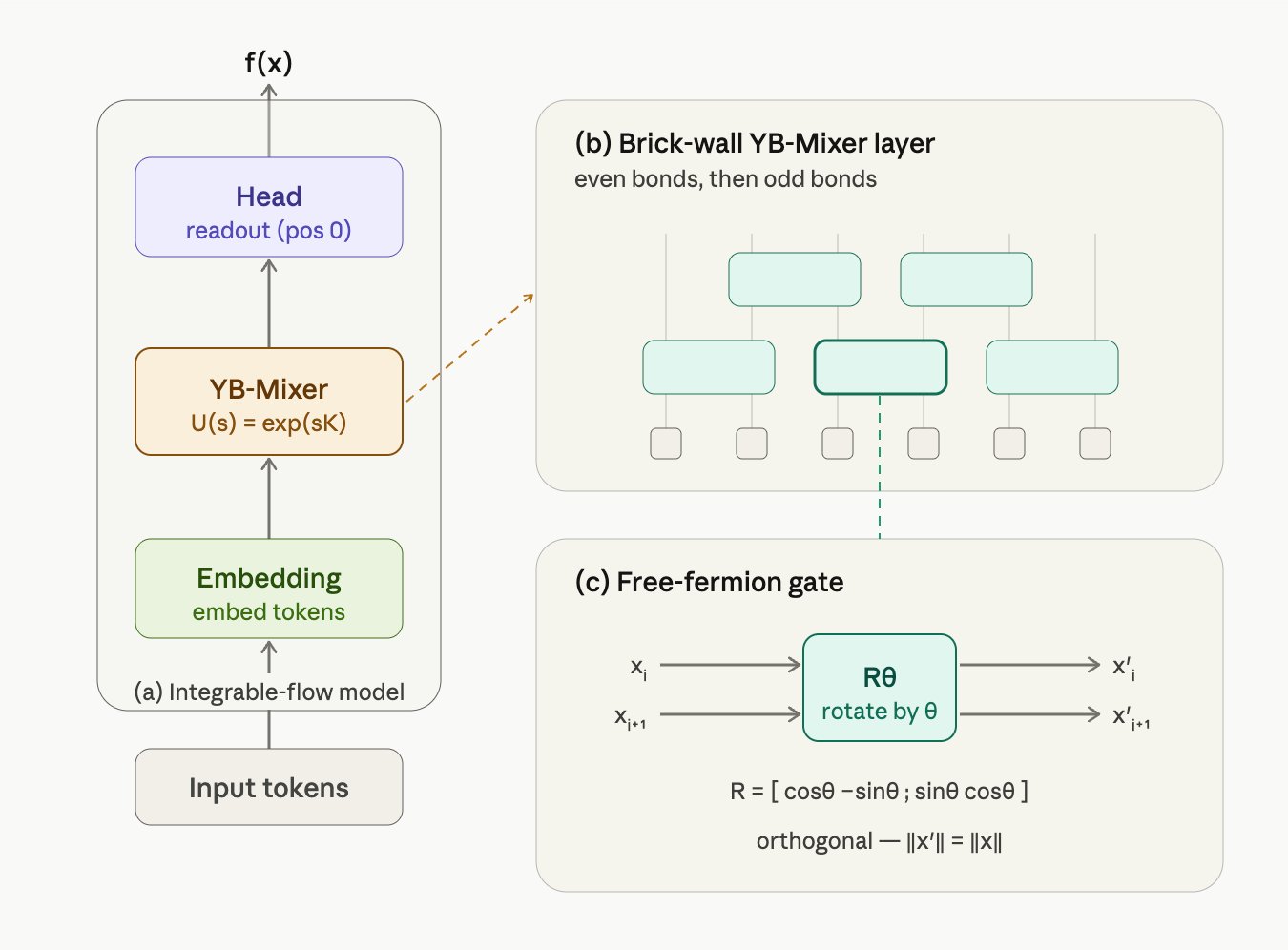}
\caption{\textbf{The YB-Mixer architecture.} (a)~The integrable-flow model (Eq.~\ref{eq:flow}): the
input sequence is embedded, mixed by a single orthogonal flow $U(s)=\exp(sK)$ generated by a
learned antisymmetric generator $K$, and read out by a small nonlinear head applied once at
position~$0$. (b)~The brick-wall YB-Mixer layer, the discrete realization of the flow: two-token
integrable gates act on the even bonds $(1,2),(3,4),\dots$ and then the odd bonds
$(2,3),(4,5),\dots$; stacking $\Theta(L)$ such layers produces a light cone that couples the
entire sequence. (c)~The free-fermion gate, the integrable primitive (Eq.~\ref{eq:rotmix}): a
per-channel $2{\times}2$ rotation by angle~$\theta$. Because the gate is quadratic in Majorana
operators, its single-particle action is an orthogonal matrix, so the mixer is exactly
norm-preserving and depth-stable, with Jacobian condition number~$1$ at all depths
(Table~\ref{tab:stability}).}
\label{fig:arch}
\end{figure}

The global structure of YB-Mixer is shown in Figure~\ref{fig:arch}. The design reads the
integrable operator algebra as a wiring diagram: local (anti)commutation of adjacent gates
certifies a global free-fermionic---hence orthogonal---mixing rule, and the Baxterized gate
supplies a continuous mixing-strength dial~$\lambda$.

\subsection{Design principle}

YB-Mixer is obtained by reading the operator algebra as a wiring diagram with guarantees (Table~\ref{tab:mapping}):

\begin{table}[H]
\centering
\begin{tabular}{ll}
\toprule
\textbf{Integrable object} & \textbf{Neural-layer reading}\\
\midrule
Local (anti)commutation \eqref{eq:ising-exchange},\eqref{eq:m-algebra} & constraint on how adjacent gates interact\\
``algebra $\Rightarrow$ free-fermionic'' & ``local rule $\Rightarrow$ orthogonal, norm-preserving mixing''\\
Baxterized $R(\lambda)$ \eqref{eq:baxter} & a mixing gate with a continuous strength dial $\lambda$\\
Commuting $\tau(\lambda)$ \eqref{eq:commuting-tau} & order-free / variable-budget (anytime) inference\\
\bottomrule
\end{tabular}
\caption{From integrable structure to neural-layer design: each algebraic property is read as a guarantee on the mixing layer.}
\label{tab:mapping}
\end{table}

\subsection{The free-fermion reduction makes the gate orthogonal}

A free-fermion gate is \emph{quadratic} in Majoranas and therefore acts on the
\emph{single-particle} space as an orthogonal matrix. For example, the two-qubit gate
$M = X\otimes Y$ equals, under \eqref{eq:jw}, the Majorana bilinear
\begin{equation}
X\otimes Y = -\,i\,\gamma_2\gamma_4 ,
\end{equation}
whose single-particle action is a rotation in the $(\gamma_2,\gamma_4)$ plane. Consequently a
brick-wall of such gates is an \emph{orthogonal token mixer}: norm-preserving by construction,
with Jacobian singular values identically $1$.

\subsection{The brick-wall YB-Mixer layer}

Let $X\in\R^{B\times L\times C}$ be a batch of $L$-token sequences with $C$ feature channels.
A YB-Mixer layer applies a two-token integrable gate on the even bonds
$(1,2),(3,4),\dots$ and then the odd bonds $(2,3),(4,5),\dots$. In the simplest free-fermion
instantiation the gate is a per-channel rotation by an angle $\theta$,
\begin{equation}
\begin{pmatrix} x'_i \\ x'_{i+1}\end{pmatrix}
=
\begin{pmatrix} \cos\theta & -\sin\theta \\ \sin\theta & \cos\theta \end{pmatrix}
\begin{pmatrix} x_i \\ x_{i+1}\end{pmatrix},
\label{eq:rotmix}
\end{equation}
which is exactly orthogonal and integrable. Stacking $\Theta(L)$ such layers yields a light
cone covering the whole sequence. Crucially, the following reduction makes
\emph{learning} an integrable gate well-posed.

\begin{lemma}[YBE reduction]\label{lem:reduction}
Let $M_A,M_B$ be Hermitian with $M_A^2=M_B^2=\one$ and $\acomm{M_A}{M_B}=0$, and set
$R_A(\lambda)=\one+\tan(\lambda)M_A$, $R_B(\lambda)=\one+\tan(\lambda)M_B$. Then the braided
YBE
\[
R_A(\lambda)R_B(\lambda{+}\mu)R_A(\mu) = R_B(\mu)R_A(\lambda{+}\mu)R_B(\lambda)
\]
holds for all $\lambda,\mu$ if and only if
$\tan(\lambda{+}\mu)=\dfrac{\tan\lambda+\tan\mu}{1-\tan\lambda\tan\mu}$ (the tangent
addition law).
\end{lemma}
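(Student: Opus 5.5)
The plan is a direct expansion of both sides using only $M_A^2=M_B^2=\one$ and $M_BM_A=-M_AM_B$. The aim is to show that the difference between the two sides is a scalar multiple of the single operator $M_A-M_B$. To keep the algebra transparent, I will abbreviate $a=\tan\lambda$, $b=\tan\mu$, and write $c$ for the coefficient of the middle factor. I treat $c$ as an independent number until the end, when it is identified with $\tan(\lambda+\mu)$. I work at generic $\lambda,\mu$, where all three tangents are finite; the excluded points are handled at the end.

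\textbf{Step 1 (left side).} I multiply out $(\one+aM_A)(\one+cM_B)(\one+bM_A)$. The cubic term $abc\,M_AM_BM_A$ reduces to $-abc\,M_B$ after one anticommutation and $M_A^2=\one$. The two mixed terms $ac\,M_AM_B+bc\,M_BM_A$ combine to $c(a-b)M_AM_B$. This gives
\[
\mathrm{LHS}=(1+ab)\one+(a+b)M_A+c(1-ab)M_B+c(a-b)M_AM_B .
\]

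\textbf{Step 2 (right side).} The same computation with $M_A\leftrightarrow M_B$ and $a\leftrightarrow b$ exchanged appropriately applies to $(\one+bM_B)(\one+cM_A)(\one+aM_B)$. Here $M_BM_AM_B=-M_A$, and the mixed terms $bc\,M_BM_A+ac\,M_AM_B$ again give $c(a-b)M_AM_B$. Hence
\[
\mathrm{RHS}=(1+ab)\one+(a+b)M_B+c(1-ab)M_A+c(a-b)M_AM_B .
\]

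\textbf{Step 3 (subtract).} The identity and $M_AM_B$ terms cancel, leaving
\[
\mathrm{LHS}-\mathrm{RHS}=\bigl[(a+b)-c(1-ab)\bigr](M_A-M_B).
\]

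\textbf{Step 4 (nondegeneracy).} For the ``only if'' direction I need $M_A-M_B\neq 0$. This follows from the hypotheses: $(M_A-M_B)^2=2\cdot\one-\acomm{M_A}{M_B}=2\cdot\one\neq 0$. Consequently the braided YBE holds exactly when $c(1-ab)=a+b$. With $c=\tan(\lambda+\mu)$ this is precisely the tangent addition law, which proves both directions simultaneously.

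The only step I expect to need care is the bookkeeping of domains, not the algebra. At points where $1-\tan\lambda\tan\mu=0$, or where one of the tangents is undefined, the ``iff'' must be read as an identity in the remaining generic $\lambda,\mu$, and by continuity it extends to those points as a rational identity. If desired, one can instead clear denominators by multiplying through by $\cos\lambda\cos\mu\cos(\lambda+\mu)$ and work with $R(\lambda)\cos\lambda=\cos\lambda\,\one+\sin\lambda\,M$. Everything else is routine expansion, and the anticommutation relation is used exactly twice on each side.
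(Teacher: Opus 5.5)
Your proposal is correct and follows the paper's proof. Both expand the two sides using only $M_A^2=M_B^2=\one$ and $M_BM_A=-M_AM_B$, obtain the same four-term expressions, and arrive at the same condition $\tan\lambda+\tan\mu=\tan(\lambda+\mu)(1-\tan\lambda\tan\mu)$. The only difference is the nondegeneracy step: the paper asserts, without proof, that $\one,M_A,M_B,M_AM_B$ are linearly independent, whereas you observe that the difference is a scalar multiple of $M_A-M_B$ and check $M_A\neq M_B$ from $(M_A-M_B)^2=2\cdot\one$, which needs less and is fully justified.
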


\begin{proof}
Write $x=\tan\lambda$, $z=\tan\mu$, $y=\tan(\lambda{+}\mu)$. Using $M_A^2=M_B^2=\one$ and
$M_BM_A=-M_AM_B$, expand both sides:
\[
\text{LHS} = (1{+}xz) + (x{+}z)M_A + y(1{-}xz)M_B + y(x{-}z)\,M_AM_B,
\]
\[
\text{RHS} = (1{+}xz) + y(1{-}xz)M_A + (x{+}z)M_B + y(x{-}z)\,M_AM_B.
\]
Since $\one,M_A,M_B,M_AM_B$ are linearly independent, equality holds iff $x+z=y(1-xz)$, i.e.
$y=(x{+}z)/(1{-}xz)$.
\end{proof}

\begin{remark}[Learnability]
Lemma~\ref{lem:reduction} shows the integrability constraint on a learnable gate is
\emph{exactly} ``$M^2=\one$ and neighbouring embeddings anticommute''. Rather than minimize the
cubic YBE residual (ill-conditioned; see \S\ref{sec:exp-learn}), we parameterize $M=UDU^\dagger$
with $D=\mathrm{diag}(1,1,-1,-1)$ (so $M^2=\one$, $\tr M=0$ by construction) and minimize only
the anticommutator $\|M_{12}M_{23}+M_{23}M_{12}\|^2$. This well-conditioned surrogate reliably
recovers integrable gates.
\end{remark}

\subsection{The integrable-flow model and anytime inference}\label{sec:flow}

\begin{figure}[tb]
\centering
\includegraphics[width=0.92\linewidth]{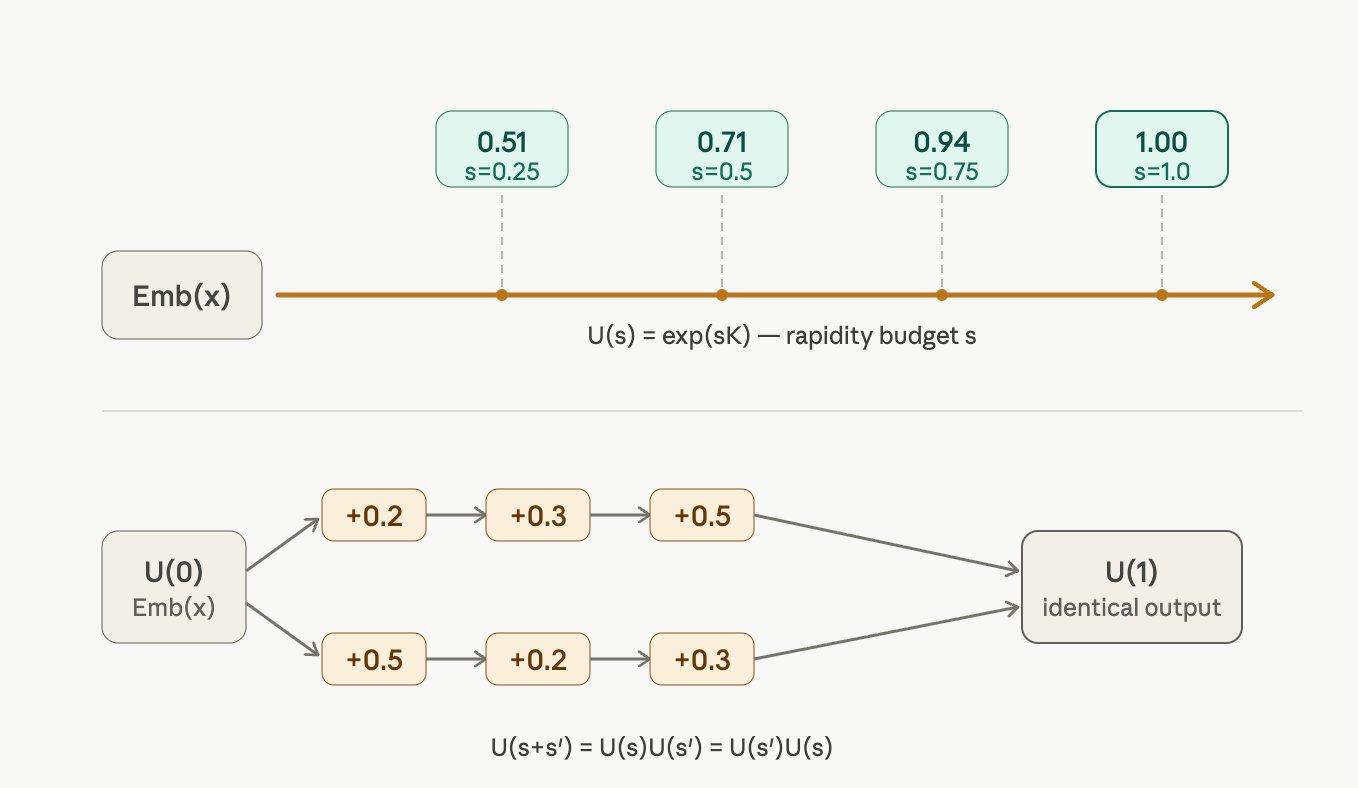}
\caption{\textbf{Anytime inference from the one-parameter group structure
(\S\ref{sec:flow}, \S\ref{sec:exp-anytime}).} Top: because $U(s)U(s')=U(s{+}s')$, the rapidity
budget~$s$ is additive and the model may be read out at any partial budget; test accuracy rises
smoothly and saturates (e.g.\ $s=0.25,0.5,0.75,1.0 \to 0.51,0.71,0.94,1.00$; the measured curve
for one run is Fig.~\ref{fig:anytime-curve}), giving a consistent
coarse-to-fine answer at any stopping point. Bottom: order-freedom---splitting a fixed total
budget $s=1$ into the same increments applied in different orders yields the identical final
state $U(1)$ (output spread ${\sim}10^{-16}$ across orderings). This is the architectural reading
of the commuting transfer matrices $[\tau(\lambda),\tau(\mu)]=0$: the increments commute, so
inference is order-free, cacheable, and parallelizable. The property is exact for the integrable
flow as a single one-parameter group with one readout head; interleaving nonlinearities between
mixing layers breaks the global group structure.}
\label{fig:anytime}
\end{figure}

Figure~\ref{fig:anytime} previews the inference-time payoff of integrability: unlike a
fixed-compute network, the integrable flow supports variable-budget (anytime) inference and
produces a stopping-point--independent, order-independent result, which we verify end-to-end on a
trained model in \S\ref{sec:exp-anytime}.

Replacing the discrete brick-wall by its continuous-time limit gives a particularly clean
object. Let $K$ be a learned antisymmetric single-particle generator and define the
\emph{integrable flow}
\begin{equation}
U(s) = \exp(sK), \qquad K^\top=-K \ \Rightarrow\ U(s)\in O(L).
\label{eq:flow}
\end{equation}
Because $\{U(s)\}_s$ is a one-parameter group,
\begin{equation}
U(s)\,U(s') = U(s+s') = U(s')\,U(s),
\label{eq:group}
\end{equation}
the entire sequence-mixing is \emph{additive} and \emph{order-independent}: a total
``rapidity'' $s$ may be split into arbitrary increments and applied in any order, cached, or
parallelized, all giving the identical result. This is the architectural manifestation of the
commuting family \eqref{eq:commuting-tau}. A variable rapidity budget $s$ then provides a
consistent coarse-to-fine (\emph{anytime}) inference mode. The full model is
\begin{equation}
f(x) = \mathrm{Head}\Big( \big[\, U(s)\,\mathrm{Emb}(x)\,\big]_{\text{pos }0} \Big),
\end{equation}
with a small nonlinear head applied \emph{once} at readout (which does not disturb the group
structure of the mixing). We emphasize the scope: the anytime property is a property of the
integrable \emph{flow}; interleaving nonlinearities \emph{between} mixing layers breaks the
global group structure (\S\ref{sec:limitations}).

\section{Experiments}\label{sec:experiments}

All experiments are deterministic and reproducible from the released scripts.
Tables report numbers from reference runs; magnitudes (not last digits) are the content.

\subsection{The integrable primitive is real}\label{sec:exp-primitive}

We build Majorana operators on $6$ qubits ($64$-dimensional Hilbert space), the multi-site
$M$-operators of~\cite{sinha2026}, and the Baxterized $R(\lambda)$, then check the algebra
\eqref{eq:m-algebra} and the $(d,6,3)$-gYBE \eqref{eq:gybe} (Table~\ref{tab:primitive}).

\begin{table}[H]
\centering
\begin{tabular}{lc}
\toprule
Check & Residual \\
\midrule
CAR $\;\acomm{\gamma_i}{\gamma_j}-2\delta_{ij}$ & $0$ \\
$M_A^2-\one,\ M_B^2-\one$ & $\sim\!10^{-16}$ \\
$\acomm{M_A}{M_B}$ (adjacent) & $0$ \\
$\comm{M_A}{M_C}$ (distant) & $\sim\!10^{-18}$ \\
\textbf{gYBE residual} \eqref{eq:gybe} & $\mathbf{\sim\!10^{-15}}$ \\
control (wrong addition law) & $1.7$ \\
\bottomrule
\end{tabular}
\caption{Integrable primitive: algebra and gYBE residuals on $6$ qubits. All structural identities hold to machine precision, while a deliberately wrong addition law (control) gives an $O(1)$ residual.}
\label{tab:primitive}
\end{table}

The gYBE residual sits at machine precision while a deliberately wrong addition law gives an
$O(1)$ residual, confirming the test is non-vacuous: the $R$-matrix is genuinely integrable.

\subsection{Integrable gates are learnable}\label{sec:exp-learn}

We implement a differentiable braided-YBE residual on a three-site space and validate it
against the free-fermion anchor $M=X\otimes Y$ (residual $\sim\!10^{-16}$; random Hermitian
gates give $O(1)$). Direct minimization of the cubic residual over a free Hermitian gate is
ill-conditioned and frequently stalls. Minimizing the algebraic surrogate
$\|M_{12}M_{23}+M_{23}M_{12}\|^2$ over the involution parameterization $M=UDU^\dagger$ reliably
drives the surrogate to $\le 10^{-6}$ (best restarts reach $\sim\!10^{-14}$), and crucially the
\emph{full} YBE residual at the solution vanishes ($\sim\!10^{-7}$, best $\sim\!10^{-14}$).
Thus stochastic gradient descent, given only the YBE constraint, \emph{rediscovers} the
extraspecial-$2$-group algebra \eqref{eq:m-algebra}.

\subsection{Norm preservation and depth stability}\label{sec:exp-stab}

We compare three gates in a brick-wall mixer (\S\ref{sec:layer}) across depth: (A) integrable
free-fermion rotations, (B) a random orthogonal gate (control), (C) a random generic gate
(control). Table~\ref{tab:stability} reports the output/input norm ratio and the Jacobian
condition number.

\begin{table}[h]
\centering
\begin{tabular}{rccc}
\toprule
depth & (A) integrable & (B) random orthogonal & (C) random generic \\
 & cond.\ no.\ ($\|Y\|/\|X\|$) & cond.\ no. & cond.\ no.\ ($\|Y\|/\|X\|$) \\
\midrule
$1$  & $1.0$ ($1.0$) & $1.0$ & $\sim\!10^{1}$ ($0.8$) \\
$8$  & $1.0$ ($1.0$) & $1.0$ & $\sim\!10^{6}$ ($0.2$) \\
$32$ & $1.0$ ($1.0$) & $1.0$ & $\sim\!10^{18}$ ($10^{-3}$) \\
\bottomrule
\end{tabular}
\caption{Depth stability. Integrable (A) and random-orthogonal (B) gates keep
Jacobian condition number $=1$ and norm ratio $=1$ at all depths; the generic gate (C)
explodes. The (B) control is deliberate: depth stability is the \emph{orthogonality}
benefit, which integrable gates provide automatically. Integrability's extra payoff is
\S\ref{sec:exp-tau}.}
\label{tab:stability}
\end{table}

\subsection{Commuting transfer matrices}\label{sec:exp-tau}

We build the QISM transfer matrix \eqref{eq:transfer} from the free-fermion gate and measure
$\max\|\comm{\tau(\lambda)}{\tau(\mu)}\|$ as the gate is perturbed off the algebra
\eqref{eq:m-algebra} by an amount $\varepsilon$ (Table~\ref{tab:commuting}).

\begin{table}[H]
\centering
\begin{tabular}{lcc}
\toprule
perturbation $\varepsilon$ & $\|M^2-\one\|$ & $\max\|\comm{\tau(\lambda)}{\tau(\mu)}\|$ \\
\midrule
$0.00$ (integrable) & $0$ & $\sim\!10^{-15}$ \\
$0.05$ & $0.05$ & $1.2$ \\
$0.20$ & $0.20$ & $3.8$ \\
$0.40$ & $0.34$ & $4.0$ \\
random gate & --- & $\sim\!10^{3}$ \\
\bottomrule
\end{tabular}
\caption{Commuting transfer matrices. $\max\|\comm{\tau(\lambda)}{\tau(\mu)}\|$ is at machine precision at the integrable point ($\varepsilon{=}0$) and grows monotonically as the gate is perturbed off the $M$-algebra.}
\label{tab:commuting}
\end{table}

The transfer matrices commute to machine precision exactly at the integrable point and the
commutator grows monotonically away from it. Commuting transfer matrices are therefore an
\emph{integrability}-specific property, not shared by generic orthogonal mixers (cf.\
Table~\ref{tab:stability}, control B).

\subsection{Trainability and the initialization recipe}\label{sec:exp-train}

We train a YB-Mixer on a long-range \emph{transport} task: inputs are random bits
$x\in\{0,1\}^{L}$, the label is $x_{L-1}$, and the classifier reads only \emph{position $0$}.
The task is solvable only if the mixer transports information across the whole sequence.
Table~\ref{tab:train} reports test accuracy ($L=16$).

\begin{table}[h]
\centering
\begin{tabular}{lcc}
\toprule
condition & test acc.\ & note \\
\midrule
YB-Mixer, init $\theta\!\sim\!\pi/4$ & $1.000$ & integrable, the recipe \\
YB-Mixer, init $\theta\!\sim\!0$     & $0.500$ & vanishing transmission (ablation) \\
Unconstrained mixer (baseline)       & $1.000$ & \\
No-mixing (control)                  & $0.502$ & confirms task needs mixing \\
\bottomrule
\end{tabular}
\caption{Trainability. With near-$\pi/4$ initialization YB-Mixer solves transport and matches
the unconstrained mixer; small-angle initialization fails because the transmitted amplitude
$\sim\!\sin(\theta)^{L-1}$ vanishes, trapping the optimizer. The trained mixing layers remain
orthogonal to $\sim\!10^{-6}$, i.e.\ the integrable structure survives training.}
\label{tab:train}
\end{table}

The initialization finding is a concrete deployable recipe: integrable mixers must be
initialized near the ``swap'' regime to avoid a vanishing-transmission trap, analogous to
gain calibration in orthogonal RNNs~\cite{unitaryrnn}.

\subsection{End-to-end anytime inference}\label{sec:exp-anytime}

We train the integrable-flow model \eqref{eq:flow} (MLP-free mixing) on the same task.
It learns perfectly (test acc.\ $1.000$). We then exercise the group structure
\eqref{eq:group}:
\begin{itemize}[leftmargin=*,itemsep=1pt]
\item \textbf{Anytime budget.} Test accuracy as a function of applied rapidity $s$ rises
smoothly and saturates at the trained value (Fig.~\ref{fig:anytime-curve}):
$s=0.25\!\to\!0.51$, $0.5\!\to\!0.69$, $0.75\!\to\!0.94$, $1.0\!\to\!1.00$ (then a mild
overshoot to $0.99$ at $1.25$). One may stop at any budget for a consistent coarse-to-fine
answer.
\item \textbf{Order-freedom.} Splitting $s=1$ into five random increments and applying them in
six random orders yields a \emph{bit-identical} output (max relative spread $\sim\!10^{-16}$),
whereas increments built from \emph{different} generators (non-integrable) diverge completely
(spread $\sim\!1.2$).
\end{itemize}
This is the architectural payoff of \eqref{eq:commuting-tau}: order-free, cacheable,
parallelizable, variable-budget inference, which a standard fixed-compute network does not
provide.

\begin{figure}[H]
\centering
\includegraphics[width=0.6\linewidth]{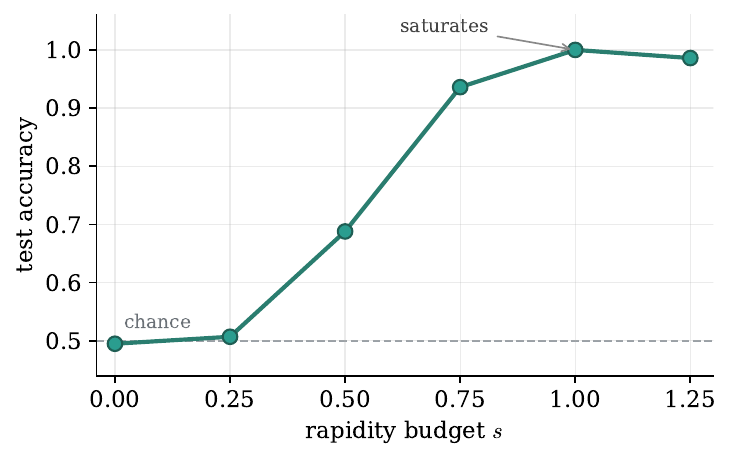}
\caption{\textbf{Measured anytime refinement curve} (integrable flow, transport task). Test
accuracy as a function of the applied rapidity budget~$s$ rises monotonically from chance at
$s{=}0$ and saturates at the trained value by $s{=}1$, so any partial budget yields a consistent
coarse-to-fine answer. Values from the released \texttt{anytime} notebook:
$s=0,0.25,0.5,0.75,1.0,1.25 \;\to\; 0.50,0.51,0.69,0.94,1.00,0.99$. Because each budget is one
exact application of $U(s)$, this is a refinement axis, not a compute-saving one.}
\label{fig:anytime-curve}
\end{figure}

\subsection{Multi-seed competitiveness and length generalization}\label{sec:exp-baselines}

Finally we compare against a self-attention baseline over three seeds
(Table~\ref{tab:baselines}).

\begin{table}[h]
\centering
\begin{tabular}{lcc}
\toprule
model & test acc.\ ($L=16$, mean $\pm$ std) & params \\
\midrule
FlowYB (integrable) & $1.000\pm0.000$ & $1{,}602$ \\
Self-attention      & $1.000\pm0.000$ & $5{,}354$ \\
Unconstrained mixer & $1.000\pm0.000$ & $1{,}602$ \\
No-mixing (control) & $0.496\pm0.003$ & $1{,}346$ \\
\bottomrule
\end{tabular}
\caption{Competitiveness. YB-Mixer matches self-attention on transport at
$\sim\!3.3\times$ fewer parameters. The order-freedom spread over the three trained models is
$7\times10^{-16}\pm10^{-16}$, i.e.\ the anytime property is robust across seeds.}
\label{tab:baselines}
\end{table}

\paragraph{Length generalization and the dispersion tension.} A \emph{translation-invariant
local} flow (banded antisymmetric Toeplitz generator), trained at $L{=}16$ and applied at
larger $L$ with budget $s\propto L$, does \emph{not} transport across longer chains: accuracy
is $0.65$ at $L{=}16$ and falls to $\sim\!0.45$ (chance) at $L{=}24,32$. The cause is
structural and worth stating precisely: an orthogonal flow generated by an antisymmetric matrix
is necessarily \emph{reciprocal}; a \emph{local} reciprocal generator has a curved dispersion
relation, so a wavepacket spreads ballistically and precise transport degrades with length.
Clean (non-dispersive) transport requires a \emph{linear} dispersion relation, which a local
generator cannot realize---but a \emph{non-local} one can.

\begin{figure}[tb]
\centering
\includegraphics[width=0.92\linewidth]{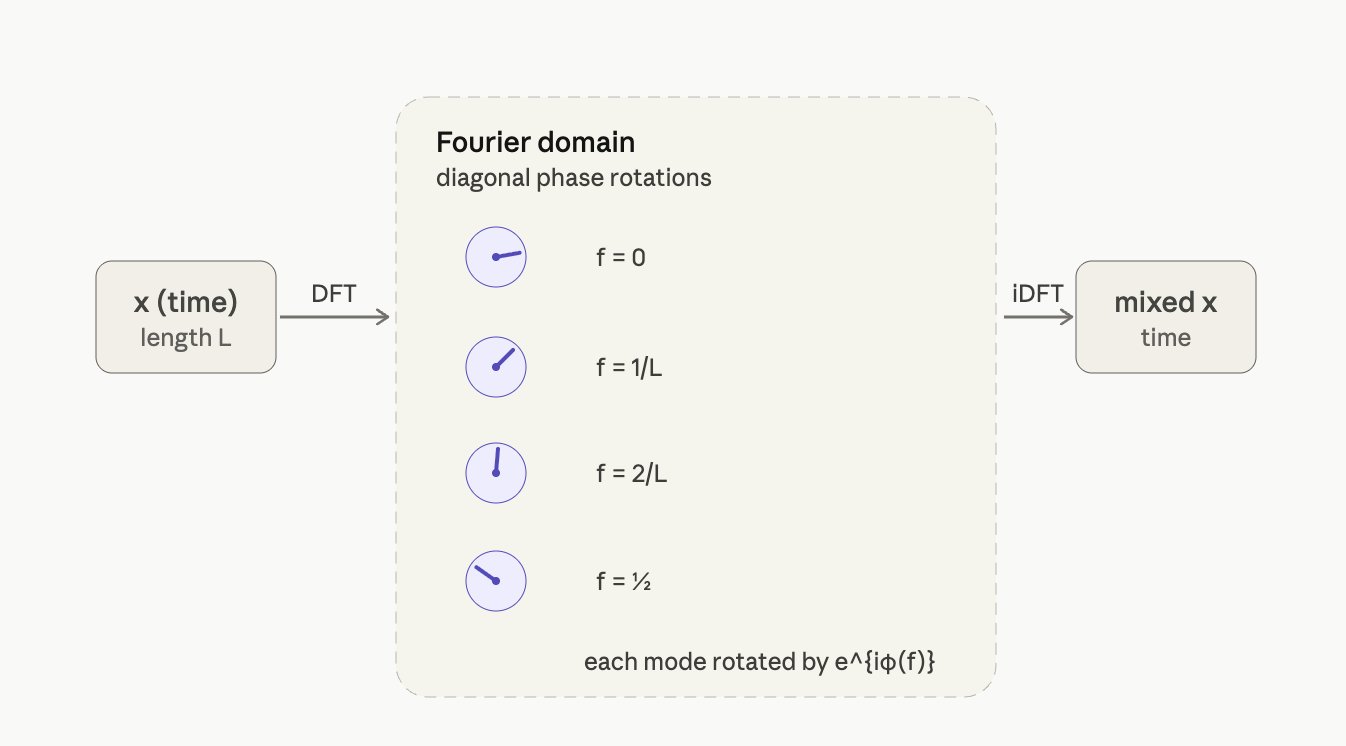}
\caption{\textbf{Fourier-diagonal mixing with the spectral generator
(\S\ref{sec:exp-baselines}).} Replacing the local generator with a circulant one diagonalizes
$K$ in the Fourier basis. The DFT maps the time-domain sequence to frequency modes; each mode~$m$
is rotated independently by a phase $e^{i\varphi(f)}$ with normalized frequency $f=m/L$, where
$\varphi(f)$ is parameterized by a small sine/cosine basis; the inverse DFT returns to the time
domain. The resulting flow is still exactly orthogonal (norm-preserving), and because all
circulant generators commute it forms an even cleaner commuting family, so the anytime property
of \S\ref{sec:flow} is preserved. Since $\varphi$ depends only on $f=m/L$, the same generator
instantiates at any length; a near-linear $\varphi(f)$ corresponds to a near-rigid shift,
removing the dispersion that causes a local generator to fail to length-generalize
(Table~\ref{tab:lengen}).}
\label{fig:spectral}
\end{figure}

Figure~\ref{fig:spectral} illustrates why the spectral generator length-generalizes. An
orthogonal flow generated by an antisymmetric matrix is necessarily reciprocal, and a local
reciprocal generator has a curved dispersion relation, so a localized signal spreads
ballistically and transport degrades with length. A global circulant generator instead admits a
near-linear dispersion at the same length-independent parameterization, giving non-dispersive
transport while retaining norm-preservation and the commuting family.

\paragraph{Resolution: a spectral generator.} We therefore replace the local generator by a
\emph{circulant} (spectral) one, parameterizing the per-mode rotation phase
$\varphi(f)$ as a function of the \emph{normalized} frequency $f=m/L$ (a small sine/cosine
basis). The resulting flow is diagonal in the Fourier basis: it is still exactly orthogonal
(norm-preserving), and since all circulant generators commute it forms an even cleaner
commuting family, so the anytime property of \S\ref{sec:flow} is preserved. Because $\varphi$
depends only on $f=m/L$, the same generator instantiates at any length. Trained at $L{=}16$ it
generalizes with roughly flat accuracy (Table~\ref{tab:lengen}); a localized signal no longer
disperses because a near-linear $\varphi(f)$ corresponds to a near-rigid shift.

\begin{table}[h]
\centering
\begin{tabular}{lcccccc}
\toprule
generator & $L{=}16$ & $L{=}24$ & $L{=}32$ & $L{=}48$ & $L{=}64$ \\
\midrule
local (Toeplitz) & $0.65$ & $0.45$ & $0.46$ & --- & --- \\
\textbf{spectral (circulant)} & $\mathbf{0.96}$ & $\mathbf{0.92}$ & $\mathbf{0.93}$ & $\mathbf{0.93}$ & $\mathbf{0.93}$ \\
\bottomrule
\end{tabular}
\caption{Length generalization (train $L{=}16$, test longer). The local generator collapses to
chance; the spectral (non-local but orthogonal and commuting) generator stays roughly flat out
to $4\times$ the training length. This resolves the dispersion tension: non-locality buys
non-dispersive transport while retaining norm-preservation and the commuting family.}
\label{tab:lengen}
\end{table}

\subsection{Critical baselines: orthogonal RNN, SSM, attention, nonlinear mixer}\label{sec:exp-baselines2}
We compare the integrable flow against the most relevant structured competitors on two tasks
(Table~\ref{tab:baselines2}): (A) long-range \emph{memory} (label $=x_0$, read at the last
position---the canonical task for orthogonal RNNs) and (B) content-dependent \emph{associative
recall} (output the value following a query key---the canonical task where content-based routing
matters). Two seeds, matched scale.

\begin{table}[h]
\centering
\begin{tabular}{lccc}
\toprule
model & (A) memory acc.\ ($L{=}24$) & (B) recall acc.\ ($L{=}17$) & params \\
\midrule
FlowYB (ours, integrable)   & $\mathbf{1.000\pm0.000}$ & $0.759\pm0.015$ & $1{,}970$ \\
Orthogonal RNN              & $0.752\pm0.248$ & $0.755\pm0.010$ & $2{,}570$ \\
Diagonal SSM (S4D-like)     & $0.833\pm0.025$ & $0.656\pm0.005$ & $1{,}466$ \\
Attention                   & $\mathbf{1.000\pm0.000}$ & $0.760\pm0.011$ & $5{,}594$ \\
Nonlinear MLP-Mixer         & $\mathbf{1.000\pm0.000}$ & $\mathbf{0.838\pm0.056}$ & $3{,}770$ \\
No-mixing (control)         & $0.512\pm0.008$ & $0.498\pm0.016$ & $1{,}394$ \\
\bottomrule
\end{tabular}
\caption{Baselines. On long-range \emph{memory}, the integrable flow matches attention and the
nonlinear mixer and \emph{exceeds} the most direct structured competitors (orthogonal RNN,
diagonal SSM) at fewer parameters. On content-dependent \emph{recall}, all linear/orthogonal
models (FlowYB, orthogonal RNN, attention here) cluster together and trail the \emph{nonlinear}
mixer---an honest, quantified expressivity gap (the cost of the orthogonality constraint).}
\label{tab:baselines2}
\end{table}

The picture is deliberately even-handed: integrability/orthogonality is an asset for stable
long-range memory and a liability for content-dependent routing. YB-Mixer's distinguishing
feature among these structured models is not raw accuracy but the exact order-free,
variable-budget inference of \S\ref{sec:exp-anytime}.

\subsection{Scaled benchmarks ($\sim$2.5M parameters, fair baselines)}\label{sec:exp-scaled}
To move beyond the controlled synthetic setting we scale the spectral, orthogonal YB-Mixer
(\S\ref{sec:exp-baselines}) to $\sim$2.5M parameters on five downstream tasks, using a
\emph{single shared block scaffold} in which only the token mixer changes (identical embedding,
MLP, pooling, optimizer, and schedule; $\dim{=}256$, depth $8$, $50$ epochs). The four baselines
are \emph{properly tuned} representatives of their families: a diagonal SSM with the
\textbf{S4D-Lin / HiPPO} initialization~\cite{s4d,hippo} (the fair SSM, not a minimal stand-in), the
\textbf{LRU}~\cite{lru} linear recurrent unit, a \textbf{Transformer}~\cite{vaswani} block, and \textbf{FNet}~\cite{fnet}'s fixed
$2$D-FFT mixing (the parameter-free spectral cousin of YB). Tasks span the recognized regimes:
permuted-MNIST ($L{=}784$); the LRA-Image (sequential-CIFAR-10) and LRA-Text (byte-IMDB) tasks~\cite{lra}
($L{=}1024$); LRA \textbf{ListOps} ($L{=}1024$, hierarchical reasoning); and the
\textbf{Induction Heads} retrieval task ($L{=}256$). Results in Table~\ref{tab:scaled}.

\begin{table}[h]
\centering
\begin{tabular}{lccccc}
\toprule
task ($L$) & \textbf{YB (ours)} & S4D-Lin & Transformer & LRU & FNet \\
\midrule
permuted-MNIST ($784$)        & $\mathbf{0.986}$ & $0.982$ & $0.983$ & $0.980$ & $0.979$ \\
byte-IMDB / LRA-Text ($1024$) & $0.794$ & $\mathbf{0.828}$ & $0.625$ & $0.677$ & $0.636$ \\
seq-CIFAR / LRA-Image ($1024$)& $\mathbf{0.848}$ & $0.848$ & $0.513$ & $0.722$ & $0.528$ \\
LRA-ListOps ($1024$)          & $0.331$ & $\mathbf{0.361}$ & $0.242$ & $0.259$ & $0.262$ \\
Induction Heads ($256$)       & $\mathbf{1.000}$ & $0.087$ & $0.087$ & $\mathbf{1.000}$ & $0.094$ \\
\midrule
params (M, $L{=}1024$)        & $2.5$ & $3.2$ & $4.4$ & $2.4$ & $2.4$ \\
\bottomrule
\end{tabular}
\caption{Matched-scale downstream accuracy (validation; same scaffold, $\dim{=}256$, depth $8$,
$50$ epochs, single seed). Best per task in bold. YB-Mixer is best or tied-best on three of five
tasks (permuted-MNIST, seq-CIFAR, Induction) at the \emph{fewest} parameters after LRU/FNet, while
the properly-initialized S4D-Lin---the strongest baseline---wins the two linguistic/hierarchical
tasks (IMDB, ListOps), with YB a close second on both. On seq-CIFAR, YB ($0.848$) and S4D-Lin
($0.848$) are tied and roughly double attention ($0.513$) and FNet ($0.528$). On Induction Heads,
YB and LRU achieve perfect retrieval while S4D-Lin, Transformer, and FNet remain at chance
($\approx 1/15$). Parameter counts vary by a few percent across tasks with input vocabulary and
length; the permuted-MNIST and Induction configurations are slightly smaller.}
\label{tab:scaled}
\end{table}

Two findings stand out. First, on the two longest \emph{perceptual} sequences (seq-CIFAR,
permuted-MNIST) the orthogonal spectral mixer is at the top of the table and attention collapses
($0.513$ on seq-CIFAR), consistent with the known difficulty of global attention on very long,
low-level inputs; YB attains this with $\sim\!1.8\times$ fewer parameters than the Transformer.
Second, the \textbf{Induction Heads} result is qualitative, not marginal: YB and LRU solve
token-level retrieval \emph{perfectly} while the convolutional SSM (S4D-Lin), the fixed-FFT mixer
(FNet), and attention sit at chance. This task was run \emph{without positional embeddings} to
permit length-extrapolation evaluation (\S\ref{sec:exp-baselines}), which specifically
disadvantages attention; the salient point is that YB's structured mixing routes a specific token
to the readout with no positional encoding at all, whereas a fixed spectral map (FNet) and a
bidirectional diagonal convolution (S4D-Lin) cannot.

We remain careful about scope. (i) The SSM baseline is now the properly HiPPO-initialized
S4D-Lin---a fair, strong competitor (it wins IMDB and ListOps)---so this is a matched comparison
against tuned baselines, not against minimal stand-ins; absolute numbers are still
\emph{within our controlled harness} rather than against maximally-tuned published systems
(tuned S4 reaches $\sim$88\% on LRA-Image with task-specific engineering). (ii) Results are
single-seed; multi-seed confirmation and the remaining LRA tasks (Pathfinder, Retrieval) are
future work. With those caveats, the picture is consistent and honest: YB-Mixer is
\emph{competitive with a well-tuned SSM} across five tasks---winning more of them, at fewer
parameters---and is one of only two mixers that solve long-range token retrieval, which we
attribute to its global spectral receptive field combined with exact norm-preservation.

\section{Related work}\label{sec:related}

\paragraph{Integrability and machine learning.} Most prior intersections \emph{use} neural
networks to \emph{discover or solve} $R$-matrices and integrable systems, rather than using
integrability as an architectural primitive~\cite{rmatrixnet}. Brick-wall circuits of
Yang--Baxter gates appear in quantum simulation~\cite{sinha2025} but not as classical learning
layers. YB-Mixer instead uses the YBE/commuting-transfer-matrix structure \emph{as} the mixer,
which is, to our knowledge, new.

\paragraph{Stable and structured mixers.} Gradient stability via norm preservation is well
studied: unitary and orthogonal RNNs~\cite{unitaryrnn,wisdom2016,mhammedi2017,vorontsov2017,eurnn,scornn} and
orthogonal initialization/dynamics~\cite{saxe2014,pennington2017}. YB-Mixer's depth stability
(Table~\ref{tab:stability}) is the \emph{same} orthogonality benefit, here supplied automatically
by the free-fermion algebra rather than imposed; experimentally it matches or beats orthogonal
RNNs and a diagonal SSM on long-range memory (Table~\ref{tab:baselines2}). The continuous
flow~\eqref{eq:flow} is an orthogonal linear state-space model closely related to structured
SSMs~\cite{s4,s4d,s5,hippo,mamba,lru} and the broader family of efficient recurrent and
linear-attention sequence mixers~\cite{linattn,rwkv,retnet,hyena,griffin,fnet}; indeed our \emph{spectral} generator (\S\ref{sec:exp-baselines}) is a
Fourier-diagonal (global-convolution) SSM. What integrability contributes on top is the commuting
family and exact order-free inference. Physics-structured architectures such as Hamiltonian neural
networks~\cite{hnn,chen2020} similarly bake conservation laws into the model; YB-Mixer bakes in
integrability.

\paragraph{Token mixers.} Relative to attention~\cite{vaswani}, MLP-style mixers~\cite{mlpmixer}, and Fourier
mixers~\cite{fnet}, YB-Mixer is a constrained (orthogonal, integrable) mixer: less
expressive for content-based routing, but exactly stable and uniquely order-free.

\paragraph{Relation to the source construction and our delta.} The physics we build on---hidden
TFIMs, the extraspecial-$2$-group $R$-matrices, and the boost-operator charge tower---is due
to~\cite{sinha2026}, building on free-fermions-in-disguise~\cite{fendley2019} and the generalized
YBE of~\cite{rowell2010}. Our contribution over~\cite{sinha2026} is entirely on the
machine-learning side and is fourfold: (i) the YBE-to-learnability reduction
(Lemma~\ref{lem:reduction}) and the well-conditioned surrogate that makes integrable gates
\emph{trainable}; (ii) the single-particle realization of the gate as an \emph{orthogonal token
mixer} usable in a classical network; (iii) the reading of commuting transfer matrices as
\emph{order-free inference}, demonstrated end-to-end on a trained model; and (iv) the spectral
generator that makes the flow \emph{length-generalize}. None of these appears in~\cite{sinha2026}.
We use no result from~\cite{sinha2026} beyond the verified algebra of \S\ref{sec:background}.

\section{Limitations}\label{sec:limitations}

(1) \textbf{Scale and tuning.} We report results at $\sim$2.5M parameters on five real
downstream tasks (\S\ref{sec:exp-scaled}) against properly-initialized baselines (the SSM is the
HiPPO-initialized S4D-Lin, a fair competitor that wins IMDB and ListOps), but not yet at $10^7$+
parameters, on the full LRA suite (Pathfinder, Retrieval), or with multi-seed error bars;
absolute numbers are still within a controlled harness rather than against maximally-tuned
published systems. Larger-scale, multi-seed evaluation is the natural next step. (2) \textbf{Scope of anytime.} The exact order-free/variable-budget property
holds for the integrable \emph{flow} (mixing as a one-parameter group) with a single readout
head; interleaving nonlinearities \emph{between} mixing layers breaks the global group structure.
It is therefore a property of a specific architecture class, not of any network containing a
YB-Mixer layer, and---while exact---is one instance of the broader adaptive-computation
idea~\cite{branchynet,calm,universaltransformer,matryoshka}. (3) \textbf{Expressivity.} As an orthogonal mixer, YB-Mixer cannot
perform content-based routing; Table~\ref{tab:baselines2} quantifies the gap to a nonlinear mixer
on associative recall. (4) \textbf{Length generalization.} A \emph{local} generator disperses and
fails to length-generalize; our \emph{spectral} generator (\S\ref{sec:exp-baselines}) resolves
this on the transport task out to $4\times$, but at a small accuracy cost and only verified on the
synthetic setting.

\section{Conclusion}\label{sec:conclusion}

We introduced YB-Mixer, a token-mixing layer derived from the generalized Yang--Baxter /
free-fermion structure of hidden Ising models. The unifying idea is that a \emph{local}
algebraic constraint certifies \emph{global} computational guarantees: the Ising exchange
algebra makes the mixer exactly orthogonal (norm-preserving, depth-stable), and commuting
transfer matrices make inference order-free and variable-budget. We verified each link of this chain numerically and showed the resulting layer is trainable and,
at $\sim$2.5M parameters against properly-tuned baselines, competitive with the strongest of
them---best or tied on three of five downstream tasks at the fewest parameters (notably $84.8\%$
on LRA-Image, tying the HiPPO-initialized S4D-Lin and far ahead of Transformer/LRU/FNet, and
exact retrieval on Induction Heads), while honestly documenting where a tuned SSM wins
(IMDB, ListOps). Natural next steps
include the boost-operator charge tower as a parameter-efficient multi-scale feature generator,
directed/non-dispersive integrable generators for length generalization, and the
$(d,2k,k)$-gYBE family for richer multi-site mixers.

\end{document}